%% file: root_arxiv.tex
\documentclass[letterpaper, 10 pt, conference]{IEEEtran}  

\IEEEoverridecommandlockouts

\usepackage{cite}
\usepackage{amsmath,amssymb,amsfonts}
\allowdisplaybreaks
\usepackage{graphicx}
\usepackage{textcomp}
\usepackage{xcolor}
\usepackage[english]{babel}
\usepackage{amsthm}
\usepackage{bm}
\usepackage{array}
\usepackage{stfloats}
\usepackage{url}
\usepackage{hyperref}
\usepackage{verbatim}
\usepackage{multirow}
\usepackage{booktabs}
\usepackage{algpseudocode}
\usepackage[linesnumbered, ruled, vlined]{algorithm2e}

\newtheorem{theorem}{Theorem}

\newtheorem{lemma}[theorem]{Lemma}

\newtheorem{remark}{Remark}

\title{\LARGE \bf
Unified Branch-and-Bound Search for the\\ Steiner Traveling Salesman Problem on Graphs of Convex Sets
}

\author{Jingtao Tang and Hang Ma%
\thanks{The authors are with the School of Computing Science, Simon Fraser University, Burnaby, BC V5A1S6, Canada. {\tt\footnotesize \{jingtao\_tang, hangma\}@sfu.ca}. 
This work has been submitted to the IEEE for possible publication. Copyright may be transferred without notice.}}

\begin{document}

\maketitle

\input{intro}
\input{problem}
\input{search}
\input{lbg}
\input{exp}
\input{conclusion}

\bibliographystyle{IEEEtran}
\bibliography{ref}

\end{document}

%% file: intro.tex
\begin{abstract}
We formalize the Steiner Traveling Salesman Problem (Steiner-TSP) on Graphs of Convex Sets (GCS), which seeks a minimum-cost closed trajectory through required convex sets while allowing optional transit vertices and revisits.
To explore the resulting infinite solution space, we propose a unified branch-and-bound search over rooted walk prefixes.
Additive lower-bound-graph costs bound committed prefixes, while a cut-separated connected-flow relaxation lower-bounds the residual cost of visiting every remaining target and returning to the root.
Under a uniform positive-cost assumption, best-first traversal terminates after finitely many expansions on every feasible instance without an initial incumbent, whereas depth-first traversal does so once a finite incumbent is available.
For a user-specified factor $\epsilon\geq1$, a global lower bound certifies that either strategy's incumbent cost is at most $\epsilon$ times the global optimum.
We further demonstrate joint sensing-mode, visitation-order, and continuous-trajectory selection for a mobile-manipulator inspection task, including action precedences expressed in linear temporal logic over finite traces (LTL$_f$).
Both traversal strategies find feasible solutions on all benchmark instances within $30\,\mathrm{s}$ with mean certified optimality gaps of $28.1\%$ and $29.7\%$, respectively, whereas two recent baselines succeed on only about half of the instances.
\end{abstract}

\section{Introduction}

Planning a minimum-cost trajectory visiting multiple task regions couples discrete route selection with continuous trajectory optimization.
Graphs of Convex Sets (GCS)~\cite{marcucci2024graphs} encode candidate routes as discrete graph choices and their associated trajectories as continuous variables subject to convex costs and constraints.
Although trajectory optimization is convex for a fixed route, jointly optimizing the discrete route and continuous trajectory is NP-hard in general~\cite{marcucci2024shortest}.
GCS has enabled applications including contact-rich manipulation~\cite{graesdal2024tight,chia2024gcs} and multi-robot motion planning~\cite{tang2025spacetime,zhao2025cb,tang2026search}.
Building on the single-query shortest-path formulation, recent work has considered multi-query shortest paths~\cite{morozov2024multi}, repeated-vertex shortest walks~\cite{morozov2025mixed}, and temporal-logic planning via product GCSs~\cite{kurtz2023temporal,wei2025hierarchical} or augmented GCSs~\cite{you2026framework}.
For multi-region routing, the Traveling Salesman Problem (TSP) on GCS seeks a minimum-cost closed trajectory that visits every GCS vertex while jointly choosing the visitation order and continuous trajectory variables~\cite{marcucci2024graphs,tang2026ghost,luna2026augmented}.
Existing methods address these coupled decisions through a unified Mixed-Integer Convex Program (MICP)~\cite{marcucci2024graphs}, hierarchical tour and path search~\cite{tang2026ghost}, or an augmented GCS with exponentially many target-subset layers~\cite{luna2026augmented}.
In many routing tasks, however, only selected GCS vertices correspond to required task regions.

\begin{figure}[t]
    \centering
    \includegraphics[width=0.98\linewidth]{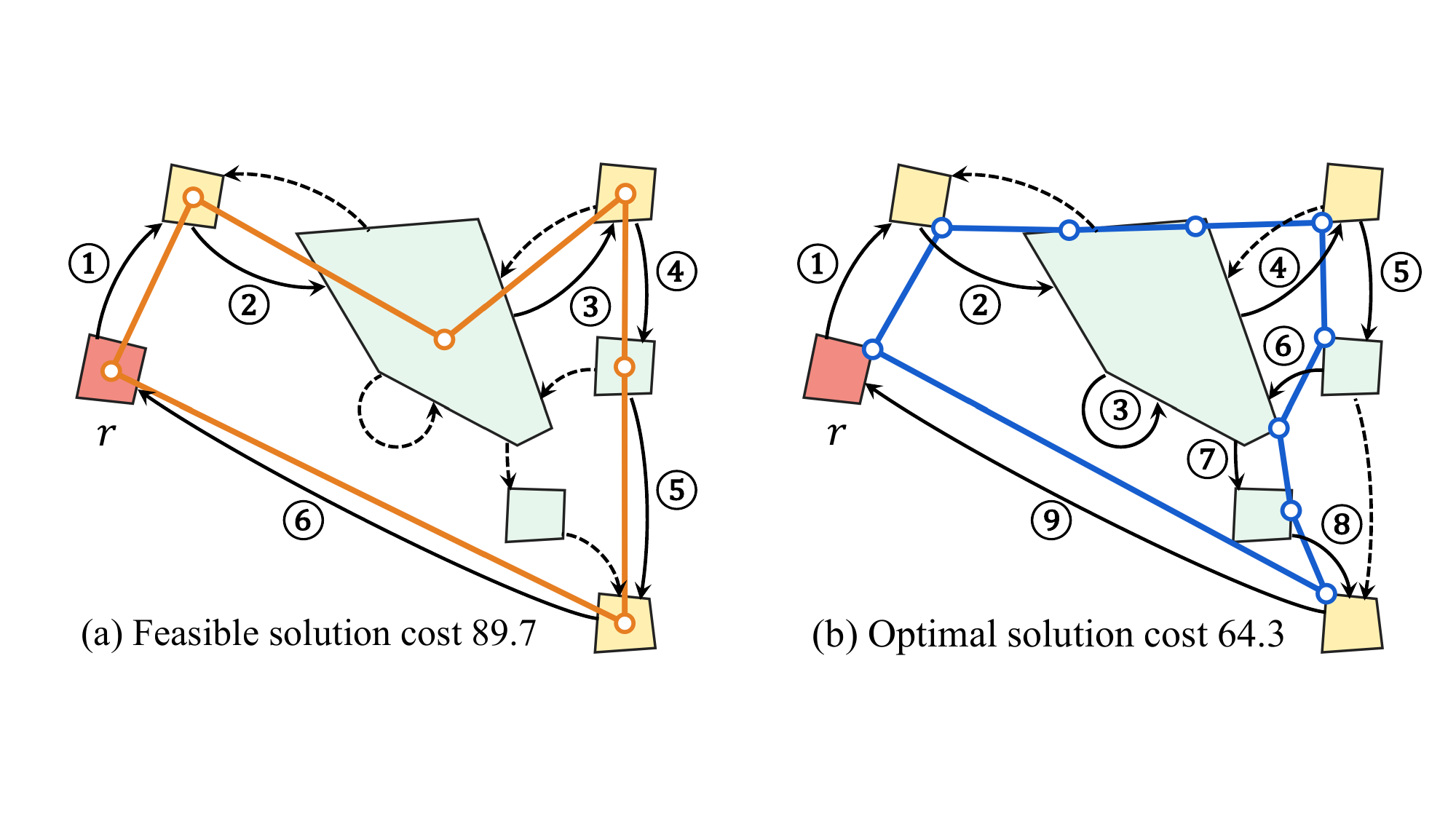}
    \caption{Steiner-TSP on a GCS seeks a minimum-cost closed trajectory from root $r$ (red) visiting all target convex sets (yellow). Black dashed arrows denote directed GCS edges (solid if used). The orange solution (a) is feasible without revisits, whereas the  blue solution (b) exploits vertex revisits and is globally optimal, with circled numbers identifying the underlying GCS edges.}
    \label{fig:steiner-tsp-gcs}
\end{figure}

We study the \emph{Steiner-TSP on GCS}, a required-subset generalization of the TSP on GCS.
Only a designated subset of vertices must be visited, while non-required vertices remain available for transit and vertices may be revisited, consistent with discrete Steiner-TSP semantics~\cite{RODRIGUEZPEREIRA2019615}.
A prior inspection-planning demonstration adapted a high-level restricted-TSP~\cite{hamacher1985k} formulation to require visits to selected GCS vertices~\cite{tang2026ghost}.
Here, we formalize the general problem and study its solution structure.
Compared with finding a shortest path in GCS~\cite{marcucci2024shortest}, the Steiner-TSP on GCS must additionally choose the visitation order of the required vertices, the walks between successive visits, and the return walk to the root.
Depending on the GCS topology, visiting all required vertices and returning to the root may require vertex revisits~\cite{tang2026ghost}; even when unnecessary for connectivity, revisits may reduce trajectory cost~\cite{morozov2025mixed}.
Candidate solutions therefore cannot be restricted a priori to walks without revisits.
Once cyclic subwalks are admitted, the number of repetitions is unbounded, so even a finite GCS can induce infinitely many finite candidate walks.
Efficiently searching this infinite family is therefore a central algorithmic challenge in solving the Steiner-TSP on GCS.

Recent work has explored graph search as an alternative for GCS optimization.
For shortest-path problems, some methods interleave discrete expansion with convex optimization of partial or finite-horizon trajectories~\cite{natarajan2024ixg,chia2024gcs,tang2026search}, whereas another line grows a vertex subset and repeatedly solves restricted GCS relaxations to obtain bounds~\cite{sundar2025bounding}.
Applying graph search to the Steiner-TSP on GCS raises a central design question: when should continuous trajectory optimization be performed?
Optimizing every walk prefix can be expensive.
Deferring full trajectory optimization until a complete target-covering closed walk is found, however, requires efficient lower bounds on the committed-prefix cost and the cost of visiting the remaining targets and returning to the root.

Our unified branch-and-bound search traverses the rooted-walk prefix tree in Fig.~\ref{fig:steiner-tsp-gcs}, maintaining one frontier of live prefixes (Sec.~\ref{sec:unified-search}).
The tree represents every finite rooted walk admitted by the problem, including cycles and vertex revisits.
To order and prune the frontier without optimizing each prefix, we combine an additive committed-prefix cost bound (Sec.~\ref{subsec:cost-to-come}) with a cut-separated connected-flow relaxation lower-bounding the cost of visiting every remaining target and returning to the root (Sec.~\ref{subsec:cost-to-go}).
Their admissible sum reserves full trajectory optimization for complete target-covering closed walks (Sec.~\ref{subsec:search-procedure}).
Under the uniform positive-cost assumption, best-first traversal terminates after finitely many expansions on every feasible instance without an initial incumbent; depth-first terminates after finitely many additional expansions once a finite incumbent exists.
When either strategy terminates with a finite incumbent, the global lower bound certifies that its cost is at most $\epsilon$ times the global optimum for a user-specified factor $\epsilon\geq1$ (Sec.~\ref{subsec:theoretical-guarantees}).
Both variants return finite incumbents on all 180 benchmark instances within the $30\,\mathrm{s}$ budget (Sec.~\ref{sec:numerical-results}).
A mobile-manipulator case study demonstrates joint sensing-mode, visitation-order, GCS-walk, and continuous-trajectory selection with LTL$_f$ action precedences (Sec.~\ref{sec:case-study}).

%% file: problem.tex
\section{Steiner-TSP on GCS}
\label{sec:problem}

A GCS is a finite directed graph $G=(V,E,\mathcal{X})$.
Each vertex $v\in V$ is associated with a nonempty compact convex set $\mathcal{X}_v$ and a positive convex cost $c_v:\mathcal{X}_v\to\mathbb{R}_{> 0}$. 
Each edge $e=(u,v)\in E$ represents an allowed transition, with a closed convex feasible set $\mathcal{X}_e\subseteq\mathcal{X}_u\times\mathcal{X}_v$ and a nonnegative convex cost $c_e:\mathcal{X}_e\to\mathbb{R}_{\geq 0}$.
We assume a uniform per-vertex minimum cost $\eta>0$ such that $c_v(\mathbf{x})\geq\eta$ for every $v\in V$ and $\mathbf{x}\in\mathcal{X}_v$.
Notably, we allow a self-loop $(v,v)\in E$ for each $v\in V$.
A \textit{walk} $\pi=(v_1,\ldots,v_k)$ in $G$ is a finite sequence of vertices such that $(v_{i},v_{i+1})\in E$ for all $i=1,\ldots,k-1$, where vertices may repeat to support more general applications or solutions with smaller costs~\cite{morozov2025mixed} (see also Fig.~\ref{fig:steiner-tsp-gcs}).
Unlike a \textit{path} that disallows vertex revisits, a walk may be necessary to form a closed route when $G$ is incomplete.
A \textit{trajectory} $\tau=(\mathbf{x}_1,\ldots,\mathbf{x}_k)$ is \textit{conditioned} on the walk $\pi$ if $\mathbf{x}_i\in\mathcal{X}_{v_i}$ for $i=1,\ldots,k$ and $(\mathbf{x}_{i},\mathbf{x}_{i+1})\in\mathcal{X}_{(v_{i},v_{i+1})}$ for $i=1,\ldots,k-1$.
Each occurrence of a repeated vertex has its own continuous trajectory variable. For a trajectory conditioned on a closed walk, however, we require $\mathbf{x}_k=\mathbf{x}_1$ and call the trajectory \textit{closed}. By convention, the terminal occurrence incurs no additional vertex cost.
The trajectory cost is therefore
\begin{equation}
    c(\tau):=\sum_{i=1}^{k-1}\left[
    c_{v_{i}}(\mathbf{x}_{i})+c_{(v_{i},v_{i+1})}(\mathbf{x}_{i},\mathbf{x}_{i+1})\right].\label{eq:trajectory-cost}
\end{equation}

Given a target set $V_t\subseteq V$ containing at least two vertices and a designated root $r\in V_t$, the \textit{Steiner-TSP on GCS} is
\begin{subequations}
\begin{align}
    \min_{\pi,\tau}\quad & c(\tau) \\
    \textbf{s.t.}\quad
    &(v_{i-1},v_{i})\in E, && i=2,\ldots,k, \\
    &v_1=v_k=r \text{ and } \mathbf{x}_1=\mathbf{x}_k,\label{eq:steiner-tsp-on-gcs:root}\\
    &V_t\subseteq\{v_1,\ldots,v_{k-1}\},\label{eq:steiner-tsp-on-gcs:cover}\\
    &\tau\text{ is conditioned on }\pi.
\end{align}
\label{eq:steiner-tsp-on-gcs}
\end{subequations}
A feasible solution consists of a closed trajectory conditioned on a closed walk that visits every target vertex at least once.
This formulation generalizes the TSP on GCS~\cite{marcucci2024graphs,tang2026ghost}, which is recovered when $V_t=V$.
Its required-subset semantics also coincide with the labeled-target setting of \cite{luna2026augmented}, and the Steiner terminology emphasizes that non-target GCS vertices remain available for transit without being required.

The Steiner-TSP on GCS couples the discrete walk $\pi$ with the continuous trajectory $\tau$.
A \textit{convex restriction} fixes $\pi$ in Eqn.~\eqref{eq:steiner-tsp-on-gcs} and minimizes $c(\tau)$ over the continuous variables consistent with that walk.
MICP~\cite{marcucci2024graphs} optimizes $\pi$ and $\tau$ jointly within its formulation, whereas GHOST uses a two-level search~\cite{tang2026ghost}: a high-level restricted-TSP search explores target tours, while a low-level search constructs their realizing GCS walks and evaluates complete target-covering closed walks by convex restriction.
Both baselines allow vertices to recur across different inter-target segments. MICP requires each segment to be vertex-simple, while GHOST prevents vertex revisits while searching toward the next target and advances in the selected tour once that target becomes adjacent.

%% file: search.tex
\section{Unified Branch-and-Bound Search}
\label{sec:unified-search}

The Steiner-TSP on GCS has an intrinsic solution space with coupled discrete and continuous components.
Its discrete component consists of finite rooted GCS walks, while each target-covering return to the root indexes a walk-conditioned convex restriction over the continuous trajectory variables; this restriction may be infeasible.
As vertices may be revisited, this family of walks is generally infinite, and unrolling it yields a finitely branching prefix tree determined by the problem.
Our unified branch-and-bound search systematically explores this tree by maintaining a single set of live walk prefixes and using the bounds developed in Sec.~\ref{sec:lbg-relaxation} to prioritize and prune them.

\subsection{Search Space}\label{subsec:search_space}

Formally, the discrete tree contains a search node $n$ for each prefix walk $n.\pi=(v_1=r,v_2,\ldots,v_k)$.
This full-prefix identity is essential because two prefixes reaching the same frontier vertex $v_k$ can induce different feasible boundary conditions and optimal conditioned trajectories even when followed by the same suffix~\cite{chia2024gcs,tang2026search}.
Let $V(n.\pi):=\{v_1,\ldots,v_k\}$ denote the set of vertices visited by $n.\pi$.
For any successor $w$ of $v_k$, let $n.\pi\oplus w:=(v_1,\ldots,v_k,w)$ denote the prefix walk obtained by appending $w$.
A node $n$ is a solution node if $V_t\subseteq V(n.\pi)$ and $v_k=r$.
This is only a discrete condition: the indexed convex restriction may be infeasible; when feasible, its optimal value is the minimum cost among trajectories conditioned on $n.\pi$.
Every node $n$, including a solution node, has a child $n_c$ with $n_c.\pi=n.\pi\oplus w$ for each edge $(v_k,w)\in E$.
Consequently, the tree contains every finite rooted GCS walk, including walks that continue after an earlier target-covering return to $r$; together with the convex restrictions indexed by its solution nodes, it represents every feasible pair $(\pi,\tau)$ of Eqn.~\eqref{eq:steiner-tsp-on-gcs}.

The tree is defined independently of how it is explored.
Alg.~\ref{alg:best-first-bnb} materializes it lazily by maintaining a set of live search nodes, called the \emph{Frontier}.
A traversal strategy selects which live node to explore next.
Each node $n$ additionally stores a cost-to-come lower bound $\hat g(n)$ and a cost-to-go heuristic $h(n)$.
We call $h$ \textit{admissible} with respect to $\hat g$ if, for every feasible solution $(\pi',\tau)$ whose walk extends $n.\pi$, $h(n)\leq c(\tau)-\hat g(n)$.
Sec.~\ref{subsec:search-procedure} combines these quantities into the node lower bound used for Frontier ordering and pruning.

\subsection{Search Procedure}\label{subsec:search-procedure}

We first define the node lower bound and traversal strategy used by the proposed Alg.~\ref{alg:best-first-bnb}.
For a search node $n$ and its child $n_c$ induced by a successor $w$, the shared relaxation developed in Sec.~\ref{sec:lbg-relaxation} supplies an incremental lower-bound cost $\ell(n,w)$, yielding $\hat g(n_c)=\hat g(n)+\ell(n,w)$.
Since each extension commits one vertex cost, the uniform positive-cost assumption in Sec.~\ref{sec:problem} gives $\ell(n,w)\geq\eta$.
Together with the admissible cost-to-go heuristic, this defines the node lower bound
\begin{equation}
    f(n):=\hat g(n)+h(n),
    \label{eq:search-node-lower-bound}
\end{equation}
which lower-bounds the cost of every feasible solution whose walk extends $n.\pi$.
The traversal strategy controls only which live node is selected.
We describe two canonical choices: best-first maintains a min-priority queue ordered by $f(n)$, whereas depth-first uses a last-in-first-out stack and locally orders siblings by $f(n)$.
All traversal strategies prune any node satisfying $f(n)\geq\overline c$ and, once $\overline c<+\infty$, use the global stopping condition $\overline c\leq\epsilon\min_{n\in\textnormal{Frontier}}f(n)$ for $\epsilon\geq1$.

Alg.~\ref{alg:best-first-bnb} accepts an optional initial incumbent to prune in the initial search stage.
Our implementation constructs one by solving the convex restriction for a nearest-neighbor closed walk that repeatedly follows a shortest graph path to the closest unvisited target and finally returns to $r$.
If its restriction is infeasible, no initial incumbent is supplied and $\overline c=+\infty$.

Alg.~\ref{alg:best-first-bnb} initializes Frontier with one two-vertex walk $(r,v)$ for every outgoing edge of $r$ (lines~\ref{line:search-init-frontier}--\ref{line:search-init-push}).
Each iteration checks the global stopping condition, selects a node according to the traversal strategy, and discards it if its bound cannot improve the incumbent (lines~\ref{line:search-loop}--\ref{line:search-prune-continue}).
An unpruned solution node is evaluated by convex restriction and then, like every unpruned node, expanded along all outgoing edges; only children whose cost-to-come and node bounds are below $\overline c$ are inserted (lines~\ref{line:search-solution-test}--\ref{line:search-child-push}).
If Frontier is exhausted or the stopping condition holds, it returns the incumbent if one exists (line~\ref{line:search-return}).

\begin{algorithm}[t]
\DontPrintSemicolon
\linespread{0.8}\selectfont
\caption{Unified Branch-and-Bound Search}
\label{alg:best-first-bnb}
\SetKwInput{KwInput}{Input}
\SetKwInput{KwParam}{Param}
\SetKwInput{KwOutput}{Output}
\KwInput{GCS $G=(V,E,\mathcal{X})$, targets $V_t$, root $r$, optional initial incumbent $\tau^*$ with cost $\overline c$}
\KwParam{traversal strategy, suboptimality factor $\epsilon\geq1$}
\KwOutput{updated incumbent $\tau^*$ and its cost $\overline c$}
$\textnormal{Frontier}\gets\emptyset$\;\label{line:search-init-frontier}
\ForEach{$(r,v)\in E$}{
    $n\gets\textsc{Node}((r,v),\hat g=0)$\;
    \If{$f(n)<\overline c$}{\label{line:search-init-test}$\textnormal{Frontier}.\operatorname{push}(n)$\;\label{line:search-init-push}}
}
\While{$\textnormal{Frontier}\neq\emptyset$}{\label{line:search-loop}
    \If{$\overline c<+\infty$ and $\overline c\leq\epsilon\min_{n\in\textnormal{Frontier}}f(n)$}{\label{line:search-stop-test}
        \textbf{break}\Comment{$\epsilon$-optimal incumbent found}\label{line:search-stop-break}
    }
    $n\gets\textnormal{Frontier}.\operatorname{pop}()$\;\label{line:search-pop}
    \If{$f(n)\geq\overline c$}{\label{line:search-prune-test}\textbf{continue}\label{line:search-prune-continue}}
    \If{$v_k=r$ and $V_t\subseteq V(n.\pi)$}{\label{line:search-solution-test}
        $\tau\gets\textsc{ConvexRestriction}(G,n.\pi)$\;\label{line:search-solution-evaluate}
        \If{$\tau$ is feasible and $c(\tau)<\overline c$}{
            $(\tau^*,\overline c)\gets(\tau,c(\tau))$\;\label{line:search-solution-update}
        }
    }
    \ForEach{$(v_k,w)\in E$}{\label{line:search-expand}
        \If{$\hat g(n)+\ell(n,w)<\overline c$}{
            $n_c\gets\textsc{Node}(n.\pi\oplus w,\hat g=\hat g(n)+\ell(n,w))$\;
            \If{$f(n_c)<\overline c$}{\label{line:search-child-test}$\textnormal{Frontier}.\operatorname{push}(n_c)$\;\label{line:search-child-push}}
        }
    }
}
\Return $(\tau^*,\overline c)$\;\label{line:search-return}
\end{algorithm}

\subsection{Theoretical Properties}\label{subsec:theoretical-guarantees}
We establish finite termination for best-first traversal (Theorem~\ref{thm:finite-search}) and then show that it also holds for depth-first traversal once a finite incumbent becomes available (Remark~\ref{rem:dfs-finite}).

\begin{theorem}[Finite Best-First Search]
\label{thm:finite-search}
Alg.~\ref{alg:best-first-bnb} with best-first traversal terminates after finitely many node expansions for a feasible instance.
\end{theorem}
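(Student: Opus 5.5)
The plan is to bound the depth of every node the best-first traversal can ever pop and expand, and then use finite branching. Let $c^\star<+\infty$ denote the optimal cost of the feasible instance, attained or approached by some feasible pair $(\pi^\star,\tau^\star)$. (If the infimum is not attained, take any feasible pair with cost $c^\star+\delta$; the argument below only needs some finite feasible cost $C$.) Each prefix of $\pi^\star$ of length at least two is a node of the tree in Sec.~\ref{subsec:search_space}. Admissibility of $h$ with respect to $\hat g$, together with Eqn.~\eqref{eq:search-node-lower-bound}, gives $f(n)\leq C$ for every such prefix node $n$. Since $\hat g$ is a lower bound on any completion, the cost-to-come test also passes. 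Hence none of these prefixes is ever rejected by a pruning test unless $\overline c\leq f(n)\leq C$. In that case a finite incumbent of cost at most $C$ already exists, which is fine for the argument.

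The key invariant I will establish is the following. As long as the algorithm has not stopped, the Frontier contains a node with $f\leq C$; or else $\overline c\leq C$ already holds and the stopping rule or pruning will act. The argument is the standard one from $A^*$ completeness. Among the prefixes of $\pi^\star$, consider the deepest one that has been generated. It is either still on the Frontier or has been popped. If it has been popped and was not pruned, its child along $\pi^\star$ was inserted, unless $\overline c\leq C$. Thus, while $\overline c>C$, some prefix of $\pi^\star$ always lies on the Frontier. The popped node therefore satisfies $f(n)\leq C$, because best-first pops the minimum $f$. Once $\overline c\leq C$, the test at line~\ref{line:search-prune-test} discards any popped node with $f(n)\geq\overline c$. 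So in all cases every node that is actually expanded has $f(n)<\max(C,\overline c_0)$ with an effective threshold $\leq C$ after the first feasible solution. More precisely, every expanded node has $f(n)\leq C$.

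Next I convert the bound on $f$ into a depth bound. Every node satisfies $\hat g(n)\geq (k-2)\eta$, since $\hat g=0$ at the root pairs and each extension adds $\ell(n,w)\geq\eta$. We also need $h(n)\geq 0$. This follows from $h$ being a cost-to-go lower bound from the relaxation of Sec.~\ref{sec:lbg-relaxation}; if that is not explicit, I would instead use $\hat g(n)\leq C$ directly via the child test $\hat g(n)+\ell<\overline c$, which holds once $\overline c\leq C$. It follows that every expanded node has depth $k\leq 2+C/\eta$.

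Since $G$ is finite, the prefix tree truncated at this depth has finitely many nodes, at most $\sum_{j\leq K}|V|^{j}$. Each node is generated at most once, because the tree structure means each prefix has a unique parent. Therefore the number of expansions is finite. The Frontier also receives only finitely many insertions, so it eventually empties, or the stopping condition fires; either way the algorithm terminates. The main obstacle I expect is the middle step. I must carefully handle the phase before any finite incumbent exists, where nodes are not pruned at all. There the bound on expanded nodes must come purely from best-first ordering against the ever-present $\pi^\star$ prefix. I must also check that nodes with $f$ exactly equal to $C$, or ties, cannot be popped infinitely often. This is resolved because ties still obey the same depth bound $\hat g\leq C$.
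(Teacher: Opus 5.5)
Your proof is correct, and it takes a genuinely different route from the paper's. The paper splits the run into two phases:
\begin{itemize}
\item While $\overline c=+\infty$, best-first ordering plus induction along the prefixes of a fixed feasible $\bar\pi$ shows that a finite incumbent appears after finitely many expansions.
\item After the first finite incumbent $\overline c_0$, the cost-to-come insertion test gives $\hat g(n)<\overline c_0$ for every node inserted afterward. This bounds their depth by $\overline c_0/\eta$, independently of the traversal strategy.
\end{itemize}
You instead use the classical A$^*$ invariant. While $\overline c>C$, some prefix of $\pi^\star$ with $f\le C$ is live, so best-first pops only nodes with $f\le C$. Once $\overline c\le C$, the pop-time pruning test enforces $f<\overline c\le C$. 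Hence every expanded node satisfies $f(n)\le C$ throughout the run.

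Your version buys a single uniform bound. Because $C$ is an arbitrary feasible cost, it actually yields $f(n)\le c^\star$ and prefix length at most $2+c^\star/\eta$ for every expanded node. It also handles a poor initial incumbent $C<\overline c<+\infty$ without a separate case. The paper's version isolates the only traversal-dependent step, and that is exactly what Remark~\ref{rem:dfs-finite} reuses for depth-first traversal. Your argument relies on best-first ordering until $\overline c\le C$, so it does not give that remark directly.

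Two small points need tightening.
\begin{enumerate}
\item The inequality $h(n)\ge 0$ does not follow from $h$ being a lower bound, since lower bounds can be negative. It holds because the LP~\eqref{eq:connected-lbg-cut} minimizes costs $\ell_t\ge\eta$ over $\xi\ge 0$, or equals $+\infty$ if infeasible. Your fallback via the child test $\hat g(n)+\ell<\overline c$ only applies once $\overline c\le C$. It therefore cannot cover the regime $\overline c>C$, which is precisely where $h\ge0$ is needed to turn $f\le C$ into $\hat g\le C$.
\item Your invariant maintenance skips the case where the deepest generated prefix is $\pi^\star$ itself, which has no child along $\pi^\star$. Popping it is not pruned, since $f\le C<\overline c$. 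It is a solution node, so its convex restriction is evaluated; this restriction is feasible with value at most $C$, so $\overline c\le C$ from then on.
\end{enumerate}
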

\begin{proof}
Let $d(n):=|n.\pi|-2$ be the number of extensions after an initial two-vertex node.
Suppose first that initialization leaves $\overline c=+\infty$, and choose any feasible solution $(\bar\pi,\bar\tau)$ with cost $C:=c(\bar\tau)$.
Every node whose prefix lies on $\bar\pi$ satisfies $f(n)\leq C$ by the lower-bound property of $f$.
Before an incumbent is found, $\overline c=+\infty$, so none of these prefixes is pruned.
The proposed relaxation has nonnegative costs, so $h(n)\geq0$.
Since every extension increases $\hat g$ by at least $\eta$, any node with $f(n)\leq C$ satisfies $d(n)\eta\leq\hat g(n)\leq f(n)\leq C$.
Since $G$ is finite, only finitely many nodes have such bounded depth.
Because best-first traversal always selects a minimum-key live node, induction over the finitely many prefixes of $\bar\pi$ shows that it reaches and evaluates the solution node indexed by $\bar\pi$ after finitely many expansions, unless another branch supplies a finite incumbent earlier.

Let $\overline c_0<+\infty$ be the first finite incumbent cost, whether supplied initially or found during search.
Only finitely many nodes have been generated when it becomes available.
Every node inserted afterward satisfies $\hat g(n)<\overline c\leq\overline c_0$ and hence $d(n)<\overline c_0/\eta$.
Finite branching thus allows only finitely many additional nodes to be generated for finite termination.
\end{proof}

\begin{remark}[Depth-First Traversal]~\label{rem:dfs-finite}
The finite-depth argument in the proof of Theorem~\ref{thm:finite-search} is independent of the traversal strategy.
Thus, depth-first traversal also terminates after finitely many additional expansions once a finite incumbent is found during search or supplied initially.
Before then, pure last-in-first-out traversal may starve a live sibling on the infinite tree.
\end{remark}

Let $c^\star$ denote the optimal value of Eqn.~\eqref{eq:steiner-tsp-on-gcs}.
The following Theorem~\ref{thm:epsilon-optimality} show that the same lower-bound property yields an $\epsilon$-optimality certificate for any traversal strategy.

\begin{theorem}[$\epsilon$-optimality]
\label{thm:epsilon-optimality}
For any traversal strategy, if Alg.~\ref{alg:best-first-bnb} terminates with a finite incumbent of cost $\overline c$ by satisfying the global stopping condition or exhausting Frontier, then $\overline c\leq\epsilon c^\star$.
\end{theorem}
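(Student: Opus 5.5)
The proof rests on a frontier invariant: at every moment the algorithm checks the stopping condition, every feasible solution $(\pi,\tau)$ with $c(\tau)<\overline c$ has a walk $\pi$ that extends the prefix of some live node in Frontier. Once this holds, the two termination modes follow. Fix an optimal (or near-optimal) feasible solution $(\pi^\star,\tau^\star)$. If the optimum is not attained, take a sequence of feasible solutions with costs tending to $c^\star$ and pass to the limit at the end.

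\emph{Invariant.} Initially Frontier contains $(r,v)$ for every $(r,v)\in E$. Every feasible walk starts with $r$ followed by some successor, so its length-two prefix is either in Frontier or was rejected at line~\ref{line:search-init-test}. A rejection means $f(n)\geq\overline c$, so by the lower-bound property of $f$ in Eqn.~\eqref{eq:search-node-lower-bound} every extension has cost at least $\overline c$. The invariant is preserved by the operations of one iteration:
\begin{itemize}
\item \emph{Prune at line~\ref{line:search-prune-test}.} The discarded subtree only contains solutions of cost at least $f(n)\geq\overline c$.
\item \emph{Expansion.} A feasible walk extending $n.\pi$ either equals $n.\pi$ (a solution node) or extends exactly one child $n.\pi\oplus w$.
\begin{itemize}
\item In the first case the walk equals $n.\pi$, which is evaluated by \textsc{ConvexRestriction}. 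Afterward $\overline c$ is at most the optimal conditioned cost, hence at most $c(\tau)$, so the solution no longer has cost below $\overline c$.
\item Children rejected by either test satisfy $\hat g(n_c)\geq\overline c$ or $f(n_c)\geq\overline c$. In both cases every extension has cost at least $\overline c$, using $h\geq0$ in the first case.
\end{itemize}
\item \emph{Incumbent decrease.} Lowering $\overline c$ only shrinks the set of solutions with cost below $\overline c$, so the invariant is preserved.
\end{itemize}

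\emph{Conclusion.} If Frontier is exhausted, the invariant says no feasible solution has cost below $\overline c$, so $\overline c\leq c^\star\leq\epsilon c^\star$, using $\epsilon\geq1$ and $c^\star>0$. If the stopping condition holds, consider two cases.
\begin{itemize}
\item If $c^\star<\overline c$, apply the invariant to $(\pi^\star,\tau^\star)$, or to near-optimal solutions of cost below $\overline c$. This gives a live node $n$ whose prefix the walk extends, so $f(n)\leq c(\tau^\star)$. Then $\overline c\leq\epsilon\min f\leq\epsilon c(\tau^\star)$, and taking the infimum gives $\overline c\leq\epsilon c^\star$.
\item If $c^\star\geq\overline c$, the bound is immediate.
\end{itemize}

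The main obstacle is the bookkeeping of the invariant across incumbent updates and the two pruning tests at expansion. In particular, the child test on $\hat g$ alone requires $h\geq0$, which is noted in the proof of Theorem~\ref{thm:finite-search}. The solution-node case must also use the fact that the convex restriction returns the minimum over trajectories conditioned on that exact walk. When the restriction is infeasible, no trajectory is conditioned on that walk, so nothing is lost. None of these steps depend on the traversal order, which gives the ``any strategy'' claim.
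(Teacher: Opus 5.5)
Your proof is correct and follows essentially the same route as the paper. The paper traces each feasible walk's branch to one of three fates (its walk is evaluated, a prefix remains in Frontier, or a prefix is discarded by an incumbent-bound test) to get $\min\{\overline c,\min_{n\in\textnormal{Frontier}} f(n)\}\leq c^\star$, which is what your loop invariant on solutions cheaper than the current incumbent encodes; your version just makes the timing of incumbent updates and of the two child-insertion tests explicit.
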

\begin{proof}
At termination, let
\begin{equation*}
    L:=\min\left\{\overline c,\min_{n\in\textnormal{Frontier}}f(n)\right\},
\end{equation*}
where the Frontier minimum is omitted if Frontier is empty.
Consider any feasible solution $(\pi,\tau)$ of cost $C$.
Exhaustive successor generation preserves the branch corresponding to $\pi$ until its complete walk is evaluated, one of its prefixes remains in Frontier, or one of its prefixes is discarded by an incumbent-bound test.
In the first case, the walk-conditioned convex restriction gives $\overline c\leq C$; in the second, $\min_{n\in\textnormal{Frontier}}f(n)\leq C$; and in the third, the lower bound responsible for discarding the prefix gives $\overline c\leq C$.
Therefore $L\leq C$ for every feasible solution, and taking the infimum over all feasible solutions gives $L\leq c^\star$.
If the global stopping condition holds, then $\overline c\leq\epsilon L\leq\epsilon c^\star$.
If Frontier is exhausted, $L=\overline c\leq c^\star$.
Since the incumbent is feasible, $c^\star\leq\overline c$ and thus $\overline c=c^\star$.
\end{proof}

%% file: lbg.tex
\section{Lower-Bound Graph for Search Bounds}
\label{sec:lbg-relaxation}

Efficient branch-and-bound requires inexpensive cost-to-come and cost-to-go bounds at every prefix, but future extensions can change the continuous trajectory along a prefix and hence its realized cost~\cite{chia2024gcs,tang2026search}.
Prefix-conditioned convex restrictions preserve continuous consistency~\cite{natarajan2024ixg,chia2024gcs,morozov2024multi,tang2026search}, while repeated fractional GCS relaxations over expanding vertex subsets offer another online bounding strategy~\cite{sundar2025bounding}; both require continuous optimization during search, and our ablation shows that solving a prefix restriction at every generated node sharply limits search progress under the fixed runtime budget (Table~\ref{tab:results}).
To avoid this repeated online optimization, we precompute local convex-restriction costs for short GCS walks and compose them in a reusable discrete relaxation that enforces feasibility within each piece while relaxing consistency between adjacent pieces.
This representation supplies a cost-to-come lower bound $\hat g(n)$ for the committed prefix and an admissible cost-to-go heuristic $h(n)$ for visiting the remaining targets and returning to the root, without a GCS trajectory solve at every generated node.
For every feasible solution $(\pi',\tau)$ to Eqn.~\eqref{eq:steiner-tsp-on-gcs} whose walk $\pi'$ extends $n.\pi$, these bounds satisfy $\hat g(n)+h(n)\leq c(\tau)$.

\subsection{Lower-Bound Graph (LBG) Relaxation}
\label{subsec:lbg-construction}

The triplet-based LBG is one such representation and has previously been used for GCS search bounds~\cite{natarajan2024ixg,tang2026ghost}; a closely related triplet relaxation is used in~\cite{tang2026search}.
We adopt the LBG as the shared representation underlying both $\hat g$ and $h$.
Given a GCS $G=(V,E,\mathcal{X})$, we construct its LBG $L=(E,T)$ as follows.
Every length-two GCS walk $(u,v,w)$ defines a triplet $t=(u,v,w)\in T$, directed in $L$ from edge $e_1=(u,v)\in E$ to edge $e_2=(v,w)\in E$.
We assign $t$ the optimal value of the local convex program
\begin{align}
\ell_t:=\min_{\mathbf{x}_u,\mathbf{x}_v,\mathbf{x}_w}
\left[c_v(\mathbf{x}_v)+c_{e_2}(\mathbf{x}_v,\mathbf{x}_w)\right],
\label{eq:lbg-triplet-cost}
\end{align}
where $\mathbf{x}_u\in\mathcal{X}_u$, $\mathbf{x}_v\in\mathcal{X}_v$, and $\mathbf{x}_w\in\mathcal{X}_w$ are constrained by $(\mathbf{x}_u,\mathbf{x}_v)\in\mathcal{X}_{e_1}$ and $(\mathbf{x}_v,\mathbf{x}_w)\in\mathcal{X}_{e_2}$.
If the program in Eqn.~\eqref{eq:lbg-triplet-cost} is infeasible, we set $\ell_t=+\infty$.
The objective of Eqn.~\eqref{eq:lbg-triplet-cost} includes exactly the costs of the middle vertex $v$ and its outgoing edge $e_2$, as assigned in Eqn.~\eqref{eq:trajectory-cost}, while the constraints enforce local feasibility with both neighboring edges.
Thus, $\ell_t$ lower-bounds the trajectory cost committed at vertex $v$ by any feasible trajectory whose walk contains $t$.
The uniform positive-cost assumption in Sec.~\ref{sec:problem} gives
$\ell_t\geq\eta$ for every finite-cost triplet $t\in T$.

Every GCS walk $\pi=(v_1,\ldots,v_k)$ therefore induces an LBG walk through the consecutive edges $(v_i,v_{i+1})$.
Because each $\ell_t$ is computed independently, adjacent triplets need not share consistent GCS variables, so the induced LBG-walk cost lower-bounds the cost of any feasible trajectory conditioned on $\pi$.
Consider any feasible solution to the Steiner-TSP on GCS, with trajectory $\tau$ conditioned on the GCS walk $\pi=(v_1=r,v_2,\ldots,v_k=r)$.
Closing the corresponding LBG walk gives
\begin{equation}
    \hat c(\pi)
    :=\sum_{i=1}^{k-2}\ell_{(v_i,v_{i+1},v_{i+2})}
      +\ell_{(v_{k-1},r,v_2)}
    \leq c(\tau),
    \label{eq:lbg-additive-lower-bound}
\end{equation}
where $\hat c(\pi)$ denotes the cost of the closed LBG walk induced by $\pi$.
This lower bound is the closed-walk case of Theorem~1 in~\cite{tang2026ghost}, extending the open-path result in Theorem~3 of~\cite{natarajan2024ixg}.

\subsection{Cost-to-Come Lower Bound}\label{subsec:cost-to-come}

Consider a search node $n$ with $n.\pi=(v_1=r,v_2,\ldots,v_k)$.
We define its cost-to-come lower bound $\hat g(n)$ as the cost of the corresponding fixed LBG walk (Lemma~\ref{lem:cost-to-come-lower-bound}):
\begin{equation}
    \hat g(n)
    :=\sum_{i=1}^{k-2}\ell_{(v_i,v_{i+1},v_{i+2})}.
    \label{eq:prefix-lower-bound}
\end{equation}
For the initial prefix $n.\pi=(r,v_2)$, the sum is empty, so $\hat g(n)=0$.
Appending a successor $w$ extends the LBG walk by $t=(v_{k-1},v_k,w)\in T$ and produces a child $n_c$ satisfying
\begin{equation}
    \hat g(n_c)
    =\hat g(n)+\ell(n,w)=\hat g(n)+\ell_t,
    \label{eq:prefix-lower-bound-update}
\end{equation}
where the incremental lower bound in Sec.~\ref{subsec:search-procedure} is instantiated as $\ell(n,w):=\ell_t$.
Because $\ell_t\geq\eta$, each extension provides the positive
progress required by the proof of Theorem~\ref{thm:finite-search}.

\begin{lemma}
\label{lem:cost-to-come-lower-bound}
For any search node $n$ and feasible solution $(\pi',\tau)$ whose walk extends $n.\pi$, we have $\hat g(n)\leq\hat c(\pi')\leq c(\tau)$.
\end{lemma}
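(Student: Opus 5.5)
The lemma chains two inequalities, and I would prove them separately. The second inequality, $\hat c(\pi')\leq c(\tau)$, is exactly Eqn.~\eqref{eq:lbg-additive-lower-bound}. That equation is stated for any feasible solution with closed walk $\pi'=(v_1=r,\ldots,v_m=r)$, so I would simply invoke it. For completeness I would recall why it holds. For each $i\le m-2$, the restriction of $\tau$ to $(\mathbf{x}_i,\mathbf{x}_{i+1},\mathbf{x}_{i+2})$ is feasible for the local program \eqref{eq:lbg-triplet-cost} of the triplet $(v_i,v_{i+1},v_{i+2})$. Hence $\ell_{(v_i,v_{i+1},v_{i+2})}\le c_{v_{i+1}}(\mathbf{x}_{i+1})+c_{(v_{i+1},v_{i+2})}(\mathbf{x}_{i+1},\mathbf{x}_{i+2})$. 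The closing triplet $(v_{m-1},r,v_2)$ is handled the same way using $\mathbf{x}_m=\mathbf{x}_1$: the point $(\mathbf{x}_{m-1},\mathbf{x}_1,\mathbf{x}_2)$ is feasible, and its value is $c_{v_1}(\mathbf{x}_1)+c_{(v_1,v_2)}(\mathbf{x}_1,\mathbf{x}_2)$. Summing these bounds recovers every term of \eqref{eq:trajectory-cost} exactly once.

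For the first inequality, write $n.\pi=(v_1,\ldots,v_k)$ and $\pi'=(v_1,\ldots,v_m)$ with $m\ge k$, where the first $k$ entries agree. Then $\hat g(n)=\sum_{i=1}^{k-2}\ell_{(v_i,v_{i+1},v_{i+2})}$ is exactly the initial segment of the sum defining $\hat c(\pi')$, because the triplets in this partial sum use only indices up to $k$ and therefore coincide with those of $\pi'$. The remaining terms of $\hat c(\pi')$ are the triplets with $i=k-1,\ldots,m-2$ together with the closing triplet. Each of these is nonnegative, since $\ell_t\ge\eta>0$ when finite. In fact each is finite, because the feasibility argument above shows every triplet occurring in $\pi'$ has a feasible local program. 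So $\hat c(\pi')-\hat g(n)\ge0$.

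The main subtlety is the index bookkeeping at the boundary. I need $k-2\le m-2$, which follows from $m\ge k$. When $m=k$, the node itself is a solution node and only the closing triplet remains, which is still nonnegative. I also need that the prefix sum never involves the closing triplet, so that no term of $\hat g(n)$ goes unmatched in $\hat c(\pi')$. Neither point is deep; the remaining steps are direct consequences of the definitions.
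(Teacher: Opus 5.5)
Your proof is correct and matches the paper's argument: $\hat g(n)\leq\hat c(\pi')$ because $\hat g(n)$ is an initial partial sum of $\hat c(\pi')$ and the remaining triplet costs are nonnegative, and $\hat c(\pi')\leq c(\tau)$ is the LBG lower-bound property in Eqn.~\eqref{eq:lbg-additive-lower-bound}. The only difference is that you derive that property term by term, handling the closing triplet via $\mathbf{x}_m=\mathbf{x}_1$, whereas the paper cites prior work for it.
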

\begin{proof}
The first inequality holds because $\hat g(n)$ is a partial sum of the nonnegative LBG-walk cost $\hat c(\pi')$, and the second one comes from the LBG lower-bound property~\cite{tang2026search,tang2026ghost,natarajan2024ixg}.
\end{proof}

\subsection{Cut-Separated Connected-Flow Cost-to-Go}
\label{subsec:cost-to-go}

We propose an admissible heuristic $h(n)$ that lower-bounds the cost of visiting every unvisited target and closing the walk.
The heuristic is evaluated online at every search node by a cut-separated connected-flow Linear Program (LP).
Consider LBG $L=(E,T)$.
With a slight abuse of notation, $T$ denotes only the finite-cost triplets throughout this subsection.
For a search node $n$ with $n.\pi=(v_1=r,v_2,\ldots,v_k)$, let
\begin{equation*}
    e_{\mathrm{src}}:=(v_{k-1},v_k)\in E \text{\, and\, }
    V_t(n):=V_t\setminus V(n.\pi).
\end{equation*}
For each $v\in V_t(n)$, define its incoming-edge set
\begin{equation*}
    E_v:=\{(u,v)\in E\}.
\end{equation*}

Because $v$ is unvisited by $n.\pi$, $e_{\mathrm{src}}\notin E_v$.
Because each GCS edge is an LBG vertex, a GCS walk reaches $v$ exactly when its induced LBG walk visits a vertex in $E_v$.
For any feasible solution $(\pi',\tau)$ to Eqn.~\eqref{eq:steiner-tsp-on-gcs} with $n.\pi$ as a prefix of $\pi'$, the residual LBG walk induced by $\pi'$ therefore starts at $e_{\mathrm{src}}$, visits a vertex in $E_v$ for every $v\in V_t(n)$, and returns through the initial edge $(r,v_2)$ to close the walk.

To enforce closure of the LBG walk through the final triplet $(u,r,v_2)$, let $e_{\mathrm{cl}}$ denote an artificial sink vertex.
Let $E^\sharp:=E\cup\{e_{\mathrm{cl}}\}$, and let $T^\sharp$ augment $T$ with, for every closing triplet $(u,r,v_2)\in T$, a terminal transition from $(u,r)$ to $e_{\mathrm{cl}}$ with cost $\ell_{(u,r,v_2)}$.
Define the admissible source-side sets by
\begin{equation*}
    \mathcal{S}(n):=
    \bigcup_{v\in V_t(n)}
    \{S\subseteq E\setminus E_v:e_{\mathrm{src}}\in S\}.
\end{equation*}
For each $S\in\mathcal{S}(n)$, define its outgoing finite-triplet cut by
\begin{equation*}
    \delta^+(S):=\{(u,v,w)\in T:(u,v)\in S,\ (v,w)\notin S\}.
\end{equation*}
For a nonnegative flow $\xi$ on $T^\sharp$, define $\operatorname{div}\xi\in\mathbb{R}^{E^\sharp}$ componentwise, where $(\operatorname{div}\xi)_e$ is the total outgoing minus incoming flow at edge $e\in E^\sharp$.
For each $e\in E^\sharp$, let $\mathbf{1}_e\in\{0,1\}^{E^\sharp}$ denote the unit vector at $e$.
Let $\boldsymbol{\ell}^\sharp$ contain the corresponding costs on $T^\sharp$.
We define $h(n)$ as the optimal value of
\begin{subequations}
\label{eq:connected-lbg-cut}
\begin{align}
\min_{\xi\in\mathbb{R}_{\geq0}^{T^\sharp}}\quad
    &(\boldsymbol{\ell}^\sharp)^\top\xi
    \\
\mathbf{s.t.}\quad
    &\operatorname{div}\xi
        =\mathbf{1}_{e_{\mathrm{src}}}
        -\mathbf{1}_{e_{\mathrm{cl}}},
    \label{eq:connected-lbg-cut-balance}\\
    &\sum_{t\in\delta^+(S)}\xi_t\geq1,
    \quad
    \forall S\in\mathcal{S}(n).
    \label{eq:connected-lbg-connectivity-cut}
\end{align}
\end{subequations}
Eqn.~\eqref{eq:connected-lbg-cut-balance} routes one net unit of cost-bearing flow from $e_{\mathrm{src}}$ to $e_{\mathrm{cl}}$ through a valid closing terminal transition, while permitting additional circulations.
Eqn.~\eqref{eq:connected-lbg-connectivity-cut} requires at least one unit across every directed cut separating $e_{\mathrm{src}}$ from each remaining target's incoming-edge set.
Hence, a disconnected target-covering circulation cannot by itself certify target visitation.
When $V_t(n)=\emptyset$, the LP reduces to a minimum-cost flow from $e_{\mathrm{src}}$ to $e_{\mathrm{cl}}$.
Whenever the LP is infeasible, $h(n)=+\infty$.

By the max-flow/min-cut theorem, these cut constraints are equivalent to sending one auxiliary unit of flow from $e_{\mathrm{src}}$ to each $E_v$ under capacities $\xi$.
This construction adapts the multi-commodity-flow formulation for the classic Steiner-TSP~\cite{letchford2013compact} by replacing each required vertex with its incoming-edge set $E_v$.
We solve this exponential cut formulation by row generation using a persistent restricted master.
After each solve, we use $\xi$ as arc capacities, add any connectivity cuts violated by an $e_{\mathrm{src}}$-to-$E_v$ minimum cut, and reoptimize.
Lemma~\ref{lem:connected-lbg-flow-admissibility} establishes that $h$ is an admissible cost-to-go heuristic.
\begin{lemma}[Admissibility]
\label{lem:connected-lbg-flow-admissibility}
For any search node $n$ and feasible solution $(\pi',\tau)$ whose walk extends $n.\pi$, we have
\begin{equation}
    \hat g(n)+h(n)
    \leq \hat c(\pi')
    \leq c(\tau).
    \label{eq:combined-lbg-lower-bound}
\end{equation}
Thus, $h$ is admissible with respect to $\hat g$ by definition, and $f(n)$ in Eqn.~\eqref{eq:search-node-lower-bound} lower-bounds the cost of every feasible solution whose walk extends $n.\pi$.
\end{lemma}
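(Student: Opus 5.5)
The plan is to split the closed LBG walk cost $\hat c(\pi')$ into the committed part $\hat g(n)$ plus a residual part. We then exhibit a feasible point $\xi$ of Eqn.~\eqref{eq:connected-lbg-cut} whose objective equals that residual. LP optimality then gives $h(n)\leq \hat c(\pi')-\hat g(n)$, and the second inequality in Eqn.~\eqref{eq:combined-lbg-lower-bound} is already provided by Eqn.~\eqref{eq:lbg-additive-lower-bound} (equivalently Lemma~\ref{lem:cost-to-come-lower-bound}).

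Concretely, write $\pi'=(v_1=r,\ldots,v_k,\ldots,v_m=r)$ with $n.\pi=(v_1,\ldots,v_k)$. By Eqn.~\eqref{eq:lbg-additive-lower-bound},
\begin{equation*}
\hat c(\pi')=\hat g(n)+\sum_{i=k-1}^{m-2}\ell_{(v_i,v_{i+1},v_{i+2})}+\ell_{(v_{m-1},r,v_2)} .
\end{equation*}
All these triplets are finite-cost, since $c(\tau)<\infty$ and each $\ell_t$ is bounded by the local cost of $\tau$. Hence they lie in $T$ as restricted in this subsection. Define $\xi_t$ to be the number of occurrences of triplet $t$ among $(v_i,v_{i+1},v_{i+2})$ for $i=k-1,\ldots,m-2$. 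On the terminal transition from $(v_{m-1},r)$ to $e_{\mathrm{cl}}$ we set $\xi=1$; its cost $\ell_{(v_{m-1},r,v_2)}$ matches the closing term. Then $(\boldsymbol{\ell}^\sharp)^\top\xi$ equals the residual sum exactly.

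Next we check the constraints. Flow balance Eqn.~\eqref{eq:connected-lbg-cut-balance} holds because $\xi$ is the indicator multiset of the LBG walk $e_{\mathrm{src}}=(v_{k-1},v_k),(v_k,v_{k+1}),\ldots,(v_{m-1},r)$ followed by $e_{\mathrm{cl}}$. Each internal visit contributes one in and one out, the source has net outflow $1$, and $e_{\mathrm{cl}}$ has net inflow $1$. Repeated visits do not matter, since counts simply accumulate. The degenerate case $m=k$, where $n$ is itself a solution node ending at $r$, needs a brief separate check: the walk is then just $e_{\mathrm{src}}\to e_{\mathrm{cl}}$ via the closing transition, and $V_t(n)=\emptyset$.

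The main step is the cut constraints Eqn.~\eqref{eq:connected-lbg-connectivity-cut}. Fix $v\in V_t(n)$ and $S\subseteq E\setminus E_v$ with $e_{\mathrm{src}}\in S$. Since $\pi'$ covers $V_t$ but $v\notin V(n.\pi)$, some later index $j>k$ has $v_j=v$. Then the edge $(v_{j-1},v_j)\in E_v$ lies on the residual LBG walk, and this edge is not in $S$. The walk starts in $S$ and reaches an LBG vertex outside $S$ using only $T$-transitions, not the terminal one. So it has a first transition $(u,x,w)$ with $(u,x)\in S$ and $(x,w)\notin S$. That transition is in $\delta^+(S)$ and carries $\xi\geq1$. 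Taking $\xi$ as feasible, we get $h(n)\leq\hat c(\pi')-\hat g(n)$, which proves the first inequality. The final claim follows from the definition of admissibility and Eqn.~\eqref{eq:search-node-lower-bound}.
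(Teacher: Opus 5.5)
Your proposal is correct and takes essentially the same route as the paper's proof. You build an integral flow from the triplet multiplicities of the residual LBG walk plus the closing terminal transition, check balance, and satisfy each cut because the walk must leave $S$ through a finite triplet before reaching $E_v$; LP optimality and the closed-walk additive bound then give both inequalities. You additionally make explicit two points the paper leaves implicit: that the triplets of $\pi'$ have finite cost, and the degenerate case where $n$ is itself a solution node.
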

\begin{proof}
The multiplicities of the finite triplets in the residual LBG walk of $\pi'$, together with its closing terminal transition, define an integral flow $\xi$ satisfying Eqn.~\eqref{eq:connected-lbg-cut-balance}.
For each $S\in\mathcal{S}(n)$, choose a remaining target $v$ such that $S\cap E_v=\emptyset$.
The residual walk starts in $S$ and reaches an edge in $E_v$ outside $S$.
It must therefore cross $\delta^+(S)$ at least once, so every connectivity cut is satisfied.
The resulting objective is $\hat c(\pi')-\hat g(n)$.
Optimality gives $h(n)\leq\hat c(\pi')-\hat g(n)$, and the second inequality follows from Eqn.~\eqref{eq:lbg-additive-lower-bound}.
\end{proof}

%% file: exp.tex
\section{Numerical Results}\label{sec:numerical-results}
This section presents our numerical results for Steiner-TSP on GCS solvers.
We implement the proposed search and its components in \textit{Python} and evaluate the resulting solver on a machine with an \textit{Apple}\textsuperscript{\textregistered} M4 processor and 16 GB of memory.
The GCS-related trajectory optimization uses the \textit{Drake}~\cite{drake} library configured with the \textit{Gurobi} solver~\cite{gurobi}.
Gurobi also solves the persistent restricted master for the cost-to-go bound, while directed minimum-cut computations identify violated connectivity rows.
Source code, numerical results, and additional visualizations will be released upon publication.
More detailed visualizations and simulations are available at \url{https://sites.google.com/view/steiner-tsp-gcs}.

\subsection{Instances}\label{subsec:instances}

We evaluate the Steiner-TSP on GCS across the three task domains shown in Fig.~\ref{fig:instances}.
All instances use the same minimum-time trajectory model~\cite{marcucci2023motion}.
Each visit to a GCS vertex is parameterized by degree-5 B\'ezier curves for the $d$-dimensional configuration and scalar time.
Accordingly, each variable $\mathbf{x}_i\in\mathbb{R}^{6(d+1)}$ in Eqn.~\eqref{eq:steiner-tsp-on-gcs} comprises six $d$-dimensional configuration control points and six scalar time control points.
We impose $C^2$ continuity across internal GCS transitions, constrain all configuration control points to the selected GCS region, and enforce componentwise velocity bounds on each segment.
We impose neither acceleration nor torque limits.
All edge costs are zero, and each vertex cost equals its segment duration, with a minimum duration of $\eta=0.1\,\mathrm{s}$.

Each domain contains 60 instances generated from seeds $0,\ldots,11$ and five controlled sizes.
For \emph{rand}, point configurations move through convex hulls of six points sampled within $1.5\times1.5$ squares on a unit-spaced grid, yielding a size-dependent workspace.
Each seed grows a connected cover to undirected intersection-edge thresholds of $10,20,\ldots,50$, with every polygon treated as a target ($V=V_t$).
For \emph{maze}, point configurations move through a random $10\times10\times10$ unit-voxel recursive-division maze~\cite{buck2011recursivedivision} with one-voxel-thick walls.
We partition its free voxels into a minimum-cardinality set of disjoint axis-aligned cuboids, a 3-D analogue of the minimum-brick decomposition in~\cite{lu2023tmstc}, where cuboids sharing a face define GCS adjacencies.
We select $10,20,\ldots,50$ cuboids distributed across each maze GCS as targets.
For \emph{iiwa}, we use a precomputed collision-free 15-region joint-space GCS for a fixed-base 7-DoF KUKA LBR iiwa 14 in a shelves-and-bins scene, with IRIS-NP regions~\cite{petersen2023growing} bounded by the robot's joint-position limits.
Each instance adds $3,6,9,12$, or $15$ exclusive target boxes with halfwidth at most $0.01\,\mathrm{rad}$, each adjacent only to its owner region.
In all three domains, we build two oppositely directed edges if the configuration convex sets of a GCS vertex pair intersect.
Table~\ref{tab:instance-stats} summarizes the resulting graphs, with Degree computed using undirected adjacency.

\begin{figure}[t]
    \centering
    \includegraphics[width=\linewidth]{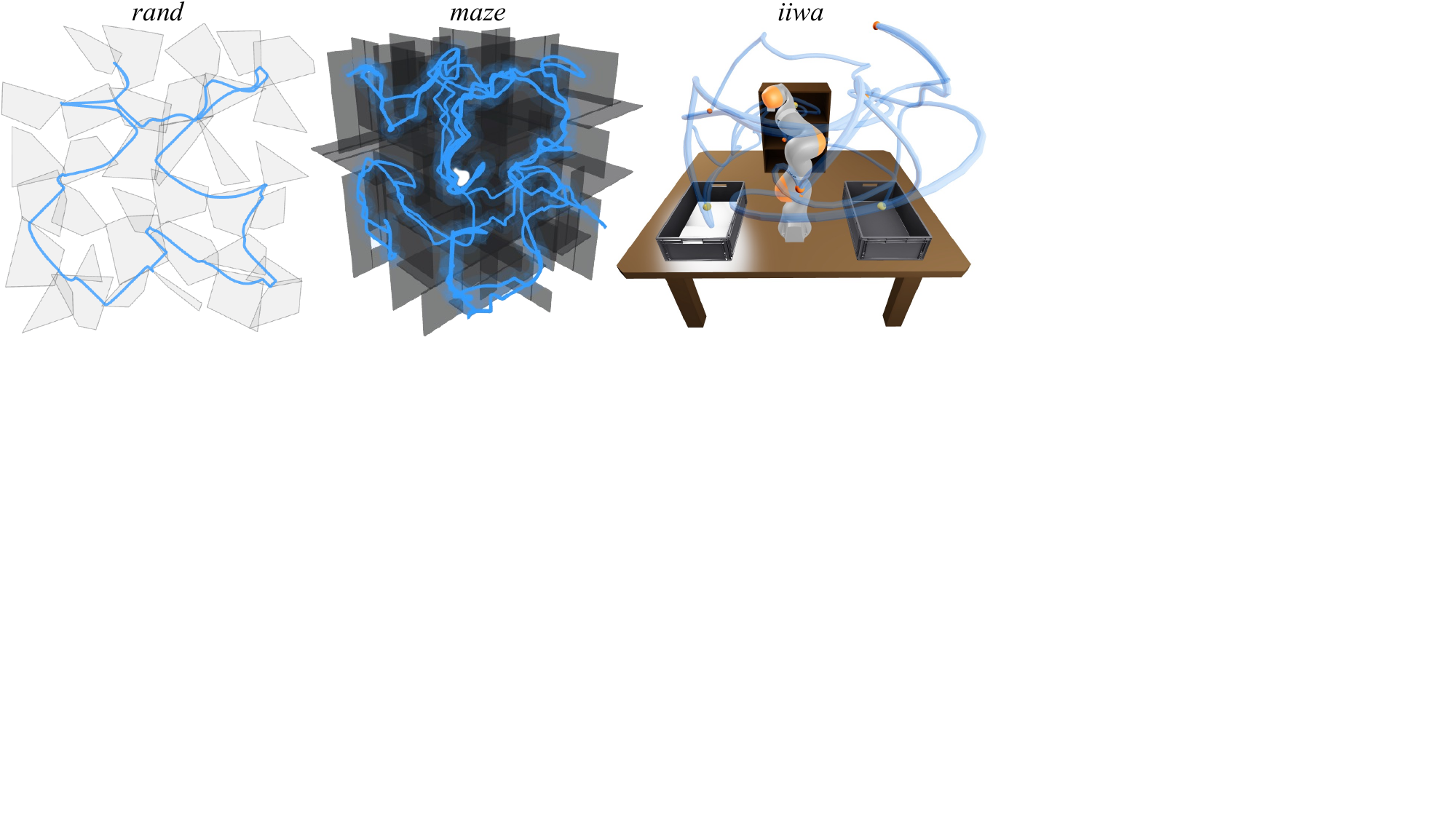}
        \caption{Steiner-TSP on GCS solution trajectories (blue) in \emph{rand} with a 2-D GCS of gray polygons, \emph{maze} with a 3-D GCS exactly partitioning the free voxels outside black walls, and \emph{iiwa} with a 7-D GCS comprising IRIS-NP~\cite{petersen2023growing} regions seeded by representative collision-free configurations.}
    \label{fig:instances}
\end{figure}

\subsection{Solvers}

We compare six solver variants: two traversal strategies for the proposed search, two bound ablations, and two existing solvers.
We instantiate the proposed search with the best-first (BF) and depth-first (DF) traversal strategies, denoted \textbf{Ours (BF)} and \textbf{Ours (DF)}, respectively.
Both use the proposed additive LBG prefix bound $\hat g$ (Sec.~\ref{subsec:cost-to-come}) and cut-separated connected-flow cost-to-go bound $h$ (Sec.~\ref{subsec:cost-to-go}).
The ablation study isolates the two proposed bounds.
\textbf{Alt-G} retains $h$ but replaces $\hat g$ with the optimal committed-prefix cost obtained from a prefix convex restriction at every generated node, whereas \textbf{Alt-H} retains $\hat g$ but replaces $h$ with an LBG heuristic that connects the current prefix to the remaining targets, spans them with a metric-closure MST, and returns to the root.
Both Alt-G and Alt-H use the best-first traversal strategy.
For the performance comparison, we include \textbf{GHOST}~\cite{tang2026ghost} and \textbf{MICP}~\cite{marcucci2024graphs}, as described in Sec.~\ref{sec:problem}.
We provide the active edges of a greedy closed walk as a partial Mixed-Integer Programming (MIP) start for MICP.
The walk repeatedly connects the current vertex to the closest unvisited target by an unweighted shortest path on the input GCS.
All solvers receive a runtime budget $T=30\,\mathrm{s}$.\footnote{For the search-based solvers, LBG precomputation is performed offline and excluded from the runtime budget.
For MICP, model construction and partial MIP-start generation are also excluded.}

\begin{table}[t]
    \centering
    \caption{Steiner-TSP on GCS instance complexity. The $|V|$, $|E|$, Degree, and $|V_t|$ columns report the minimum/mean/maximum. The $d$ column reports the configuration-space dimension, and LBG column reports the mean precomputation time in seconds.}
    \label{tab:instance-stats}
    \setlength{\tabcolsep}{5pt}
    \begin{tabular}{@{}rcccccc@{}}
        \toprule
        \textbf{Domain} & $|V|$ & $|E|$ & Degree & $|V_t|$ & $d$ & LBG \\
        \midrule
        \textit{rand} & 8/23.6/41 & 20/60.6/104 & 1/2.51/7 & 8/23.6/41 & 2 & 0.07s \\
        \textit{maze} & 63/63/63 & 124/125.2/130 & 1/1.99/4 & 10/30/50 & 3 & 0.14s \\
        \textit{iiwa} & 18/24/30 & 104/116.0/128 & 1/4.93/13 & 3/9/15 & 7 & 4.57s \\
        \bottomrule
    \end{tabular}
\end{table}

\subsection{Metrics}
We evaluate terminal quality, anytime performance, and search effort under runtime budget $T$, using as each instance's \emph{reference cost} the lowest incumbent at $T$ across all six solver variants.
\textbf{Cost Regret} is the terminal incumbent's fractional excess over this reference, whereas \textbf{Gap} is the fractional difference between the terminal incumbent and the solver-reported terminal lower bound, capped at $100\%$.
Both are averaged only over runs with a finite incumbent, with a missing finite lower bound assigned a $100\%$ gap; because GHOST and MICP restrict the walk family (Sec.~\ref{sec:problem}), their gaps do not certify the unrestricted problem.
\textbf{Feasible} counts runs with a finite incumbent at $T$.

Following~\cite{berthold2013measuring}, the Time-Normalized Primal Integral (\textbf{NPI}) averages over $[0,T]$ the incumbent's excess over the reference cost, normalized by the current incumbent and capped at one.
Periods without an incumbent contribute one, so lower NPI reflects earlier feasibility and better anytime solution quality.

For search effort, \textbf{Closed Nodes} counts popped and processed frontier nodes, whereas \textbf{Evaluated Walks} counts complete target-covering closed walks passed to a convex restriction.
Closed Nodes is omitted for GHOST and MICP, and Evaluated Walks is omitted for MICP; for the four search-based variants, Evaluated Walks includes the greedy initial walk but excludes Alt-G's prefix restrictions, which are not complete walks.
Both terminal counters may include the final operation begun before $T$ and completed afterward.

\subsection{Result Analysis}
Table~\ref{tab:results} shows that both proposed variants find incumbents on all 180 instances, compared with 99 for GHOST and 98 for MICP, while their lower NPI indicates earlier near-reference solutions.
Ours (BF) performs best across the three quality metrics on \emph{rand} and \emph{iiwa}, whereas Ours (DF) achieves the best gap and NPI on \emph{maze} and ties GHOST in displayed regret.

GHOST and MICP both exhibit complementary bottlenecks.
GHOST remains competitive when successful, with mean regret at most $3.9\%$, but none of its 81 failed runs evaluates a complete walk: failures exhaust $T$ in the initial restricted-TSP solve for \emph{maze}, in the low-level GCS-walk search for \emph{iiwa}, and at either level for larger \emph{rand} instances.
MICP finds incumbents on all \emph{maze} instances but none on \emph{iiwa}, and all its reported gaps reach the $100\%$ cap.
This contrast plausibly reflects formulation difficulty rather than raw GCS size: MICP creates one GCS copy per target and initializes only selected edge indicators, while a representative 3-target \emph{iiwa} model has 4.28 million nonzeros, $5.4$ times that of a 10-target \emph{maze} model.
This is consistent with easier initialization on nearly tree-structured six-facet maze regions than on \emph{rand}, where all vertices are targets, or on 7-D \emph{iiwa} regions with 48--237 facets.

The ablations confirm $\hat g$ and $h$ are complementary.
Alt-H is inexpensive but loose because its pairwise LBG connections need not form one ordered residual walk that visits every remaining target and returns to the root, leaving many prefixes competitive and closing 314k--871k nodes while evaluating only 1--56 complete walks.
Alt-G instead solves a prefix convex restriction for every generated child before pruning; these solves are not captured by either effort counter and limit the search to 43--736 closed nodes without evaluating a complete walk beyond the greedy initialization.
Combining the inexpensive additive $\hat g$ with the connectivity-aware $h$ avoids both bottlenecks and focuses the frontier on complete walks.

The two traversal strategies also have complementary strengths.
Ours (BF) globally prioritizes prefixes by $\hat g+h$ and achieves lower NPI on \emph{rand} and \emph{iiwa}, whereas Ours (DF) uses the same bound for pruning and sibling ordering and performs better on the nearly tree-structured \emph{maze}.

\begin{table}[t]
    \centering
    \caption{Ablation and performance comparison results.}
    \label{tab:results}
    \setlength{\tabcolsep}{3.5pt}
    \begin{tabular}{@{}crcccccc@{}}
        \toprule
        & \multicolumn{1}{c}{\multirow{2}{*}{\textbf{Solver}}}
        & \multicolumn{1}{c}{\textbf{Cost}}
        & \multicolumn{1}{c}{\multirow{2}{*}{\textbf{Gap}$\downarrow$}}
        & \multicolumn{1}{c}{\multirow{2}{*}{\textbf{Feasible}$\uparrow$}}
        & \multicolumn{1}{c}{\multirow{2}{*}{\textbf{NPI}$\downarrow$}}
        & \multicolumn{1}{c}{\textbf{Closed}}
        & \multicolumn{1}{c}{\textbf{Eval.}} \\
        & & \multicolumn{1}{c}{\textbf{Regret}$\downarrow$}
        & & & & \multicolumn{1}{c}{\textbf{Nodes}}
        & \multicolumn{1}{c}{\textbf{Walks}} \\
        \midrule
        \multirow{6}{*}{\rotatebox{90}{\textit{rand}}}
        & \textbf{GHOST\cite{tang2026ghost}} & 2.1\% & 50.4\% & 43/60 & 0.386 & -- & 294 \\
        & \textbf{MICP\cite{marcucci2024graphs}} & 24.5\% & 100.0\% & 38/60 & 0.603 & -- & -- \\
        & \textbf{Ours (BF)} & \textbf{0.0\%} & \textbf{18.3\%} & \textbf{60/60} & \textbf{0.011} & 8.6k & 666 \\
        & \textbf{Ours (DF)} & 1.8\% & 21.7\% & \textbf{60/60} & 0.023 & 1.2k & 429 \\
        \cmidrule(lr){2-8}
        & \textbf{Alt-G} & 28.0\% & 33.0\% & \textbf{60/60} & 0.205 & 736 & 1 \\
        & \textbf{Alt-H} & 26.8\% & 54.7\% & \textbf{60/60} & 0.197 & 749k & 56 \\
        \midrule
        \multirow{6}{*}{\rotatebox{90}{\textit{maze}}}
        & \textbf{GHOST\cite{tang2026ghost}} & \textbf{0.0\%} & 3.4\% & 12/60 & 0.802 & -- & 23 \\
        & \textbf{MICP\cite{marcucci2024graphs}} & 2.2\% & 100.0\% & \textbf{60/60} & 0.163 & -- & -- \\
        & \textbf{Ours (BF)} & 0.7\% & 3.6\% & \textbf{60/60} & 0.014 & 53.5k & 168 \\
        & \textbf{Ours (DF)} & \textbf{0.0\%} & \textbf{3.1\%} & \textbf{60/60} & \textbf{0.005} & 1.3k & 173 \\
        \cmidrule(lr){2-8}
        & \textbf{Alt-G} & 2.5\% & 5.2\% & \textbf{60/60} & 0.027 & 453 & 1 \\
        & \textbf{Alt-H} & 2.5\% & 52.7\% & \textbf{60/60} & 0.027 & 871k & 1 \\
        \midrule
        \multirow{6}{*}{\rotatebox{90}{\textit{iiwa}}}
        & \textbf{GHOST\cite{tang2026ghost}} & 3.9\% & 64.6\% & 44/60 & 0.383 & -- & 37 \\
        & \textbf{MICP\cite{marcucci2024graphs}} & -- & -- & 0/60 & 1.000 & -- & -- \\
        & \textbf{Ours (BF)} & \textbf{0.2\%} & \textbf{62.4\%} & \textbf{60/60} & \textbf{0.042} & 830 & 54 \\
        & \textbf{Ours (DF)} & 4.0\% & 64.4\% & \textbf{60/60} & 0.062 & 98 & 30 \\
        \cmidrule(lr){2-8}
        & \textbf{Alt-G} & 10.6\% & 64.0\% & \textbf{60/60} & 0.111 & 43 & 1 \\
        & \textbf{Alt-H} & 6.2\% & 65.1\% & \textbf{60/60} & 0.083 & 314k & 37 \\
        \bottomrule
    \end{tabular}
\end{table}

\section{Mobile-Manipulator Inspection Case Study}\label{sec:case-study}
We demonstrate Ours (BF) using a mobile manipulator comprising the same 7-DoF arm as in the \textit{iiwa} domain, a wrist-mounted camera, and a 3-DoF holonomic planar base with pose $(x_b,y_b,\theta_b)\in\mathbb{R}^2\times S^1$.
As shown in Fig.~\ref{fig:demo}, the robot departs from a charging dock, navigates the workspace to acquire eight inspection images, and returns to the dock.
We precompute the GCS $G=(V,E,\mathcal{X})$ as coupled navigation and inspection subgraphs, totaling 92 vertices and 208 directed edges.
The navigation subgraph contains collision-free regions in the 3-DoF base configuration space, lifted to the full 10-DoF robot configuration space by fixing the manipulator in a retracted posture.
The inspection subgraph contains convex regions generated by IRIS-NP~\cite{petersen2023growing} in the 7-DoF manipulator configuration space, with the base fixed at one of the inspection base poses represented in the navigation subgraph.
The charging-dock vertex serves as the root $r\in V$, and each inspection task $i=1,\ldots,8$ is represented by a nonempty service-vertex subset $V_i\subseteq V\setminus\{r\}$, where each $v\in V_i$ encodes one feasible combination of base pose and manipulator region for acquiring image $i$.
The GCS construction took $0.4\,\mathrm{s}$ and its LBG precomputation took $6.0\,\mathrm{s}$.

\begin{figure}
    \centering
    \includegraphics[width=\linewidth]{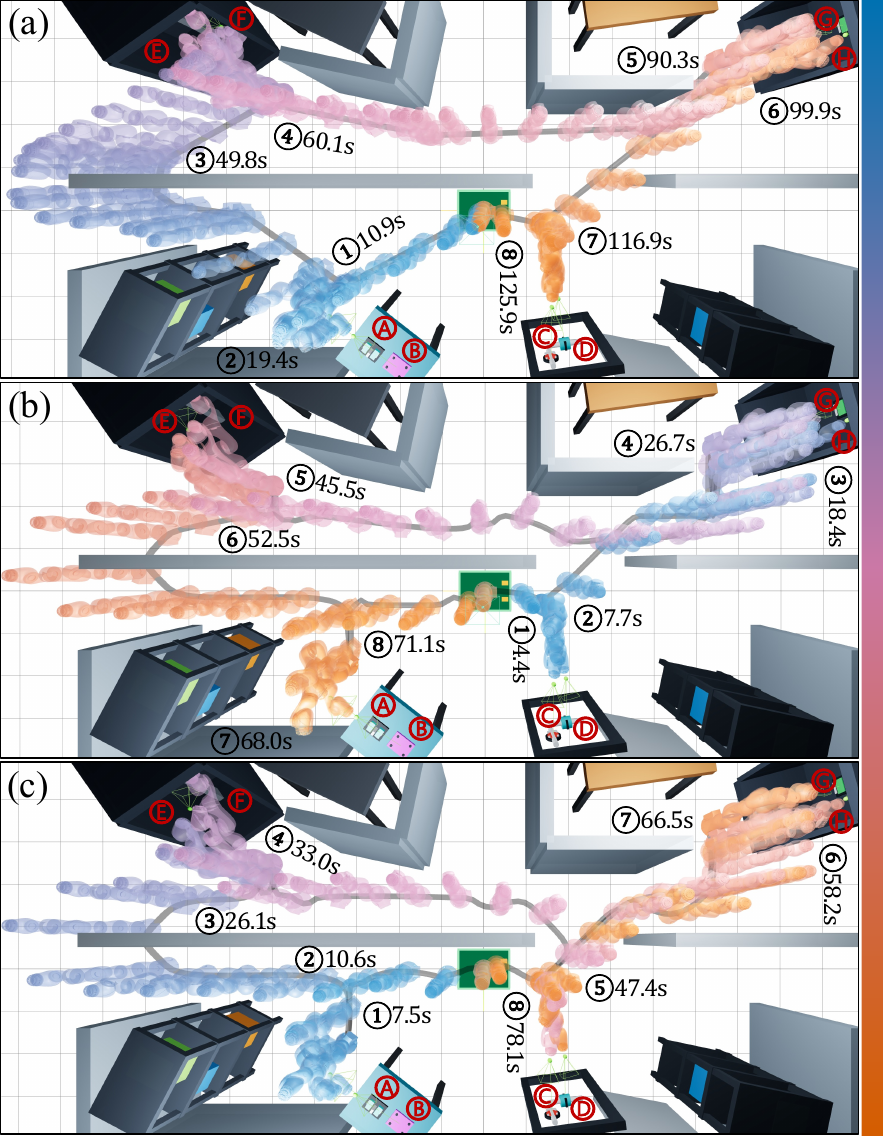}
    \caption{Mobile-manipulator inspection: (a) PRM-based generalized-TSP baseline, (b) Ours (BF), and (c) Ours (BF) with LTL$_f$~\cite{degiacomo2013linear} action precedences. Translucent arm poses encode normalized active-motion progress from blue (early) to orange (late), as shown by the right color bar. Red circled A--H mark tasks; black circled 1--8 show execution order and simulation timestamps.}
    \label{fig:demo}
\end{figure}

We formulate this inspection task as a Steiner-TSP on the precomputed GCS $G$ by generalizing the coverage constraint in Eqn.~\eqref{eq:steiner-tsp-on-gcs:cover} so that the rooted closed walk must visit at least one vertex in each $V_i$.
Across the eight tasks, the alternatives encoded by $V_1,\ldots,V_8$ yield 96 sensing-mode assignments; if every $V_i$ is a singleton, the formulation reduces to Eqn.~\eqref{eq:steiner-tsp-on-gcs} with $V_t=\{r\}\cup\bigcup_iV_i$.
The search tracks which tasks have been covered and represents each uncovered $V_i$ in the connected-flow cost-to-go relaxation by the union $\bigcup_{v\in V_i}E_v$; all other machinery is unchanged, and the search jointly selects the modes, order, walk, and trajectory.
We adopt the same benchmark trajectory model in Sec.~\ref{subsec:instances}.

Ours (BF) finds an initial $91.4\,\mathrm{s}$ greedy solution in $0.35\,\mathrm{s}$ with a $19.03\%$ gap, and improves it to $78.4\,\mathrm{s}$ in $1.72\,\mathrm{s}$ with a $5.64\%$ gap after 642 closed search nodes (Fig.~\ref{fig:demo}(b)).
For comparison, we construct a PRM~\cite{kavraki1996probabilistic} for the 3-DoF base (in $2.0\,\mathrm{s}$) with the manipulator fixed in a retracted posture, and augment it with sampled feasible 7-DoF manipulator configurations at the same inspection base poses in the GCS using OMPL~\cite{sucan2012open}.
We then solve the resulting generalized-TSP~\cite{saha2006planning} for the same eight inspection tasks.
It returns a feasible $133.1\,\mathrm{s}$ solution in $31.6\,\mathrm{s}$ without an optimality guarantee (Fig.~\ref{fig:demo}(a)).

Fig.~\ref{fig:demo}(c) further demonstrates the versatility of Ours (BF) by augmenting the inspection task with a linear temporal logic over finite traces (LTL$_f$) specification~\cite{degiacomo2013linear}.
It additionally enforces $\{\mathrm{A},\mathrm{B}\}\prec\{\mathrm{E},\mathrm{F}\}\prec\mathrm{D}\prec\{\mathrm{G},\mathrm{H}\}\prec\mathrm{C}$ for the task, where $\prec$ denotes a precedence relation.
Following prior works over product GCSs~\cite{kurtz2023temporal,wei2025hierarchical}, we compile the specification into a deterministic finite automaton and explicitly track its state in each search node. 
Ours (BF) finds an initial $87.4\,\mathrm{s}$ greedy incumbent in $0.4\,\mathrm{s}$ and, after closing 17 search nodes, improves it to $82.3\,\mathrm{s}$ in about $1.3\,\mathrm{s}$ with a $10.12\%$ gap.

%% file: conclusion.tex
\section{Conclusion}
We formalized Steiner-TSP on GCS and proposed a unified branch-and-bound search over rooted walk prefixes.
With uniformly positive GCS vertex costs, best-first traversal terminates after finitely many expansions on every feasible instance without an initial incumbent, whereas depth-first traversal does so once a finite incumbent is available.
A global lower bound provides an $\epsilon$-optimality certificate for either strategy.
Future work will apply the proposed search to physical robots for energy-aware service and maintenance, study online replanning via incremental space-time GCS~\cite{tang2026search} updates with ultrafast convex-set construction~\cite{werner2025superfast}, and develop scalable multi-robot planning through coordinated robot-local product GCSs.